\documentclass[]{bytedance_seed}

\usepackage[toc,page,header]{appendix}

\usepackage{minitoc}

\usepackage{amsmath,amssymb,amsthm}
\usepackage{mathtools}
\usepackage{enumitem}
\usepackage{tikz}
\usepackage{tikz}
\usetikzlibrary{arrows.meta,positioning,calc,decorations.pathreplacing,decorations.pathmorphing}
\usepackage{booktabs}

\theoremstyle{plain}
\newtheorem{proposition}{Proposition}

\newtheorem{corollary}{Corollary}

\theoremstyle{definition}
\newtheorem{definition}{Definition}
\newtheorem{assumption}{Assumption}

\theoremstyle{remark}
\newtheorem*{remark}{Remark}

\title{Who Prices Cognitive Labor in the Age of Agents? \\
Compute-Anchored Wages}

\author{Siqi Zhu}

\affiliation{University of Illinois Urbana-Champaign}

\abstract{
A natural intuition about the economics of AI agents is that, because agents can be replicated at near-zero marginal cost, they constitute a labor input in infinitely elastic supply, and therefore drive cognitive-labor wages to zero. We argue this framing is wrong in mechanism but partially correct in conclusion, and that the correction matters for both theory and policy. \textbf{Agents are not labor; they are a production technology that converts compute capital $K_c$ into effective units of cognitive labor $L_A$.} Once this is recognized, the elastic-supply margin that anchors the equilibrium wage migrates from the labor market to the compute capital market. Building on the classic factor-pricing framework \citep{mankiw2020}, we derive a \emph{Compute-Anchored Wage} (CAW) bound stating that, on tasks where human and agent cognitive labor are substitutes, the competitive human wage is bounded above by $\lambda \cdot k \cdot r_c$, where $r_c$ is the rental rate of compute capital, $k$ is the compute intensity of one effective agent-labor unit, and $\lambda$ is the relative human-to-agent productivity. We generalize the result through constant elasticity of substitution (CES) aggregation, separate substitutable from complementary tasks, and discuss factor-share consequences. The conclusion is concise: \emph{the price-setter for cognitive labor is no longer the labor market.}
}

\begin{document}
\maketitle


\section{Introduction}

The standard textbook account of wage determination, as presented by \citet{mankiw2020}, has two ingredients: a downward-sloping labor demand curve given by the marginal product of labor, and a labor supply curve determined by household time allocation and demographics. Equilibrium wages clear the labor market.

The arrival of capable AI agents disturbs this picture, and the analytical question is where the disturbance enters. A tempting accounting models agents as a new labor input that substitutes for human cognitive labor, is reproducible at near-zero marginal cost, and has a supply curve horizontal at zero, and then reads off a collapsing wage from that horizontal supply. We argue this accounting misplaces the elastic margin. Agents are not a labor input; they are a technology that converts compute capital into effective cognitive labor. Their supply elasticity is therefore inherited from the supply elasticity of compute capital, which is finite and governed by physical and political-economic constraints such as semiconductor fab capacity, electricity, water, land, and geopolitics. The correct model reroutes the price-determination question through the compute capital market rather than the labor market.

\paragraph{A concrete instance.} Consider a junior contract-review paralegal whose work consists largely of clause extraction, redlining against templates, and summary memos. A frontier large language model performs each of these tasks at quality close to or above the paralegal's, at a marginal compute cost on the order of single-digit dollars per labor-hour-equivalent at 2024--2025 prices (Section~\ref{sec:calib}). The competitive wage on this paralegal's substitutable hours is therefore not set by the supply of paralegals; it is set by the rental rate of compute capital, scaled by an algorithmic constant. The same logic applies to first-pass equity-research drafting, customer-support triage, and a long list of cognitive tasks that the popular discussion classifies as ``automatable'' without identifying the specific market in which the new price is determined.

This paper makes that rerouting explicit, derives a closed-form ceiling on competitive wages that we call the Compute-Anchored Wage (henceforth CAW), generalizes the substitution structure via a constant elasticity of substitution (CES) aggregator, separates wage effects across heterogeneous tasks, calibrates the bound to current compute prices, and discusses limitations and policy. Our claim is that the analytical primitive of where to put the elastic supply has been miscoded in much current discussion, and that fixing this miscoding yields a sharp, testable prediction about cognitive-labor pricing. The mathematical content of the bound is standard cost minimization; the substantive content is the identification of the elastic margin.

The remainder of the paper is organized as follows. Section~2 reviews the related literature and locates our contribution relative to the task-based automation framework of \citet{acemoglu2018,acemoglu2022} and the capital-skill complementarity tradition of \citet{korv2000}. Section~3 sets up the textbook factor-pricing model, Section~4 reformulates AI agents as a capital-to-labor conversion technology, and Section~5 derives the CAW bound. Section~6 generalizes the bound to imperfect substitution via CES, while Section~7 calibrates it to current compute prices. Section~8 visualizes the migration of the price-setter, Section~9 develops the cross-task heterogeneity prediction, and Section~10 discusses macro factor-share consequences and policy levers. Section~\ref{sec:limits} states limitations.

\section{Related Work}

Our work intersects five strands of literature. We summarize each, and then state explicitly what the Compute-Anchored Wage (CAW) framework adds.

\paragraph{Factor pricing and capital--skill complementarity.} The marginal-productivity theory of factor pricing in competitive markets, codified in \citet{mankiw2020}, supplies the entire formal apparatus we use; we make no modifications to the underlying theory, and the only re-coding is in how an AI agent enters the production function. Closer to our setup, \citet{korv2000} show that a CES production function in which capital equipment complements skilled labor and substitutes for unskilled labor matches the joint behavior of the skill premium and capital--output ratios over decades. We treat compute capital $K_c$ as a factor that, when paired with the inference stack $\phi$, becomes a quasi-substitute for human cognitive labor on a measurable subset of tasks. The crucial difference is that in \citet{korv2000} the substitution margin is between physical equipment and \emph{unskilled} labor; in CAW the margin is between compute capital and \emph{cognitive} labor previously regarded as the complement of capital. CAW thus inherits \citet{korv2000}'s machinery but inverts the sign of the implied skill premium on the substitutable margin.

\paragraph{Task-based automation (Acemoglu--Restrepo).} The closest theoretical antecedent is the task-based automation framework of Acemoglu and Restrepo (henceforth A--R) \citep{acemoglu2018,acemoglu2019,acemoglu2020,acemoglu2022}, in which capital automates a contiguous subset of tasks, displacing labor on the automated margin and reinstating labor through the creation of new tasks. Our framework can be read as a specialization of A--R to the case where the automating capital is compute and the displaced labor is cognitive; the price effect we isolate ($W_H \leq \lambda k r_c$) is the explicit equilibrium consequence of A--R's displacement effect when the automating capital is in elastic but finite supply. We extend A--R in three ways. First, we identify a specific elastic margin, namely the compute capital market, that anchors the equilibrium wage on automated tasks and traces the price-determination problem to a measurable rental rate $r_c$. Second, we make the task partition explicit through an elasticity-of-substitution parameter $\sigma$ that is in principle estimable from observed factor demands. Third, we discuss the political economy of compute-market concentration as a determinant of $r_c$ that has no analogue in standard A--R. The reinstatement effect of A--R is consistent with our complementary-task subset $T_C$ and is preserved.

\paragraph{Skill-biased technical change and occupational exposure to AI.} \citet{katz1992} document the rising college premium and frame technology--skill complementarity as the principal driver. \citet{autor2003} sharpen this into a task-based account in which information technology substitutes for routine cognitive and manual tasks while complementing non-routine analytic and interpersonal tasks, and subsequent empirical work (\citealp{autor2013}; \citealp{autor2015}; \citealp{goldinkatz2008}) traces job polarization and the long-run race between education and technology. A nascent empirical literature now measures occupation-level exposure to large language models: \citet{eloundou2023} estimate that 80\% of US workers could see at least 10\% of their tasks affected, \citet{felten2023} provide an alternative occupational exposure score, \citet{brynjolfsson2023} document a 14\% productivity gain among customer-support agents using a generative-AI assistant, and \citet{noyzhang2023} find a roughly 40\% time reduction and 18\% quality improvement on professional writing tasks. Our framework predicts a directional inversion \emph{within} cognitive labor itself: the salient axis becomes the substitutable--complementary mix on which an occupation is exposed to AI agents, rather than its position on a one-dimensional skill ladder, and these exposure studies provide the natural empirical input to the $T_S$/$T_C$ partition and to estimating $\sigma$ on each task.

\paragraph{AI in macroeconomics, general-purpose technologies, and compute supply.} \citet{aghion2017} model AI as a sequence of new general-purpose technologies that automate task production and study balanced-growth implications; \citet{korinek2019} examine distributional consequences of AI under capital--labor substitution; \citet{trammellkorinek2023} survey the macroeconomics of transformative AI; and \citet{korinek2023} discusses large language models specifically. \citet{bresnahan1995} formalize GPTs as innovations whose value derives from co-invention in downstream sectors, and \citet{goldfarb2023} provide empirical support for treating machine learning as a GPT. On the supply side, \citet{sevilla2022} document the doubling time of frontier-model training compute, and \citet{cottier2024} estimate the rising cost of frontier-model training, pinning down the empirical content of the $r_c$ time path that drives CAW. CAW operates at a different level of abstraction from the macro and GPT literatures: rather than modeling growth dynamics or diffusion, it identifies an equilibrium pricing relation that any of these dynamic models must satisfy on the substitutable margin in any period. We do not model compute supply explicitly but rely on the stylized fact that it is finite and only moderately elastic in the medium run because of fab capacity, energy, water, land, and policy.

\paragraph{Declining labor share.} \citet{karabarbounis2014}, \citet{piketty2014}, and \citet{autorvanreenen2020} document the long-run decline in the labor share and the rise of superstar firms, and \citet{susskind2020} provides a non-technical synthesis. CAW refines this discussion by identifying a specific channel, the compute share of capital income, through which capital-income concentration is now operating, and by predicting that the same mechanism will compress wages within cognitive labor.

\paragraph{Summary of contribution.} Our central analytical re-coding is that agents are a capital-to-labor conversion technology rather than a labor input, and we trace its equilibrium consequences. Relative to the closest antecedent in A--R, we add three things. We identify the compute capital market as the elastic margin that prices substitutable cognitive labor; we propose $\sigma$ on each task as the empirical primitive that replaces the binary ``automated or not'' coding; and we connect the wage bound to a quantitatively tractable factor price $r_c$ for which there is now an active spot and contract market. The remainder of the paper develops the consequences of that re-coding.

\section{Setup: Factor Markets in the Mankiw Framework}

Following the textbook factor-market model, consider a representative competitive firm with a constant-returns-to-scale production technology
\begin{equation}
Y = F(K, L), \label{eq:standard}
\end{equation}
where $K$ is physical capital and $L$ is labor. Profit maximization in a competitive output market with price $P$ yields
\begin{align}
W = P \cdot \frac{\partial F}{\partial L}, \quad
r = P \cdot \frac{\partial F}{\partial K}.
\end{align}
Factor prices equal the value of the marginal product. Equilibrium in the labor market is
\begin{equation}
L^d(W) = L^s(W),
\end{equation}
with $L^s$ determined by household time allocation and demographic factors. The wage $W$ is set by where these curves intersect.

The question is what happens when an AI agent becomes available as a partial substitute for $L$. The temptation is to add a new labor type $L_A$ with infinitely elastic supply at zero price, mechanically forcing $W \to 0$ on the substitutable margin. We now argue this addition is the wrong primitive.

\section{Reformulation: Agents as Capital-to-Labor Conversion}

\begin{definition}[Agent-Produced Cognitive Labor]
An \emph{agent-produced cognitive labor unit} $L_A$ is the cognitive-labor-equivalent output produced when a fixed bundle of compute capital $k$ (GPU-hours, energy, memory, bandwidth, model weights amortized as IP rents) is operated for one unit of time. Formally, $L_A = \phi(K_c)$, where $\phi$ is increasing and at the relevant scale approximately linear, $\phi(K_c) \approx K_c / k$.
\end{definition}

\begin{remark}
The function $\phi$ embeds the model architecture, training run, and inference stack. Improvements in algorithmic efficiency (distillation, speculative decoding, mixture-of-experts routing, KV-cache reuse) raise $\phi$ for given $K_c$ and thus reduce $k$. Crucially, $\phi$ is a \emph{technology}, not a behavioral primitive of households; it does not enter any labor supply problem.
\end{remark}

\begin{remark}[Heterogeneity within $K_c$]
The compound input $K_c$ pools three economically distinct objects. The first is physical compute, including GPUs, accelerators, data-center capacity, and energy, which is rival and finitely supplied. The second is model-weight intellectual property, which is non-rival and reproducible at zero marginal cost, so that its rents are pinned down by training sunk costs and licensing structure. The third is sunk training capital that is amortized into per-token inference prices. The CAW bound below is governed primarily by the variable inference component, with the IP-rent component entering as a markup over marginal compute cost. We discuss this further in Section~\ref{sec:limits}.
\end{remark}

The augmented production function is
\begin{equation}
Y = F(K_o, L_H, L_A) = F\bigl(K_o, L_H, \phi(K_c)\bigr), \label{eq:augmented}
\end{equation}
where $K_o$ is non-compute capital, $L_H$ is human cognitive labor, and $K_c$ is compute capital with rental rate $r_c$ determined in the compute capital market. The firm's first-order conditions become
\begin{align}
W_H &= P \cdot \frac{\partial F}{\partial L_H}, \\
r_c &= P \cdot \frac{\partial F}{\partial L_A} \cdot \phi'(K_c).
\end{align}
The crucial observation: $L_A$ has no household supply curve. Its supply derives entirely from the supply of $K_c$, which is governed by fab capacity, energy infrastructure, data-center construction lead times, and policy. These are finite and relatively inelastic in the short run, only moderately elastic in the long run.

\section{The Compute-Anchored Wage Bound}

We state the central proposition first under the strong, illustrative case of perfect substitution, then generalize.

\begin{assumption}[Perfect substitutability on the margin]\label{assn:perfect}
There exists a set of tasks on which one unit of human cognitive labor and $\lambda$ units of agent-produced labor are perfect substitutes in $F$, so that effective cognitive labor on these tasks aggregates as $L_{\text{eff}} = L_H + \lambda^{-1} L_A$. Equivalently, one human unit produces the same effective cognitive output as $\lambda$ agent units, so that $\lambda > 1$ corresponds to humans being more productive per unit, and $\lambda < 1$ to agents dominating.
\end{assumption}

\begin{proposition}[Compute-Anchored Wage]\label{prop:caw}
Under Assumption~\ref{assn:perfect} and competitive factor markets, the equilibrium human cognitive wage on the substitutable task set satisfies
\begin{equation}
\boxed{\;W_H \;\leq\; \lambda \cdot k \cdot r_c\;} \label{eq:caw}
\end{equation}
in any equilibrium with $L_H > 0$. Moreover, in any equilibrium with both $L_H > 0$ and $L_A > 0$, the bound holds with equality, $W_H = \lambda k r_c$.
\end{proposition}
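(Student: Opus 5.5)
The plan is to prove the bound by a no-arbitrage cost-minimization argument on the substitutable task set, using only the firm's problem from the previous section. By Assumption~\ref{assn:perfect}, output depends on $L_H$ and $L_A$ only through $L_{\text{eff}} = L_H + \lambda^{-1} L_A$. Using the linear approximation $\phi(K_c) \approx K_c/k$ from the definition of agent-produced cognitive labor, one unit of $L_A$ costs $k r_c$ in compute rental. One unit of effective labor can therefore be bought at unit cost $W_H$ through humans or at unit cost $\lambda k r_c$ through agents, since it takes $\lambda$ agent units at $k r_c$ each. The competitive firm minimizes the cost of any given $L_{\text{eff}}$. That problem is linear in $(L_H, L_A)$, so its solution is a corner unless the two unit costs coincide.

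The first step proves the inequality. Suppose, for contradiction, that an equilibrium has $L_H > 0$ and $W_H > \lambda k r_c$. The firm can then withdraw $\varepsilon$ units of $L_H$ and add $\lambda\varepsilon$ units of $L_A$, which means renting $\lambda k \varepsilon$ additional units of $K_c$. This keeps $L_{\text{eff}}$, and hence $Y$, unchanged, while lowering cost by $\varepsilon(W_H - \lambda k r_c) > 0$. That contradicts profit maximization. Equivalently, the Kuhn--Tucker conditions give $W_H = P F_{L_{\text{eff}}}$ when $L_H > 0$, and $P F_{L_{\text{eff}}}\lambda^{-1} \cdot \phi'(K_c) \le r_c$ with $\phi' = 1/k$. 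Combining the two yields $W_H \le \lambda k r_c$.

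The second step proves the equality case. When $L_A > 0$ as well, the compute first-order condition in Section~4 holds with equality: $r_c = P \cdot \partial F/\partial L_A \cdot \phi'(K_c) = P F_{L_{\text{eff}}} \lambda^{-1} k^{-1}$. The human first-order condition is $W_H = P F_{L_{\text{eff}}}$. Dividing one by the other eliminates $P F_{L_{\text{eff}}}$, which gives $W_H = \lambda k r_c$.

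The main obstacle is being explicit about where the linearization of $\phi$ enters. With a strictly concave $\phi$, the relevant agent unit cost is $r_c/\phi'(K_c)$ evaluated at the equilibrium $K_c$, rather than $k r_c$. I would state the proof in terms of the marginal compute requirement, defining $k \equiv 1/\phi'(K_c)$ at equilibrium, which is exact under the definition's linear specification. I would also note that $r_c$ is taken as given by the firm, being determined in the compute market, so the inequality holds whatever the elasticity of compute supply.
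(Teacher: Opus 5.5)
Your proposal is correct and follows essentially the paper's argument (the proof sketch and Appendix~B): under the linear aggregator $L_{\text{eff}} = L_H + \lambda^{-1} L_A$, you compare the human unit cost $W_H$ with the agent unit cost $\lambda k r_c$, rule out $W_H > \lambda k r_c$ when $L_H > 0$, and obtain equality from interior coexistence. Your deviation and Kuhn--Tucker phrasing ties the equality case to the Section~4 first-order conditions and reads $k$ as the marginal compute requirement $1/\phi'(K_c)$; it is a more explicit rendering of the same cost-minimization logic, whereas the paper argues that zero demand for $L_H$ is inconsistent with positive employment.
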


\begin{proof}[Proof sketch]
The unit cost of effective cognitive labor on the substitutable task is $\min\{W_H,\, \lambda k r_c\}$, since $\lambda$ units of $L_A$ substitute for one unit of $L_H$ and each unit of $L_A$ requires $k$ units of compute at rental rate $r_c$. Three cases follow. When $W_H < \lambda k r_c$, firms substitute fully toward $L_H$ on this task, so that $L_A^d = 0$ on $T_S$ and the bound is slack but nonbinding because no agent-produced labor is used. When $W_H > \lambda k r_c$, firms substitute fully toward $L_A$, so that $L_H^d = 0$ on $T_S$; any positive supply of $L_H$ at this wage on this task is unemployed, and the wage cannot be sustained in any equilibrium with positive employment of $L_H$ on $T_S$. Hence $W_H \leq \lambda k r_c$ in any such equilibrium. Finally, interior coexistence with $L_H, L_A > 0$ on $T_S$ requires the marginal indifference $W_H = \lambda k r_c$. The detailed cost-minimization derivation is in Appendix~B.
\end{proof}

\begin{corollary}[Migration of the price-setter]
The equilibrium wage on substitutable cognitive tasks is determined by the parameters of the compute capital market $(k, r_c)$ and the technology parameter $\lambda$. The labor supply curve $L_H^s$ does \emph{not} appear in the binding condition. The price-setting margin has migrated from the labor market to the compute capital market.
\end{corollary}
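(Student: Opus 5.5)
The corollary is a reading of the second part of Proposition~\ref{prop:caw}, so the plan is to reuse the cost-minimization argument behind that proposition and then check which primitives enter the binding condition.

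\textbf{Step 1: the equality.} Fix an equilibrium in which both $L_H>0$ and $L_A>0$ on the substitutable task set. By Proposition~\ref{prop:caw},
\begin{equation*}
W_H=\lambda k r_c .
\end{equation*}
To see directly why it is an equality rather than just a bound, use the linear aggregation $L_{\text{eff}}=L_H+\lambda^{-1}L_A$ from Assumption~\ref{assn:perfect} together with $L_A=K_c/k$. One effective unit then costs either $W_H$ through humans or $\lambda k r_c$ through compute. With a linear aggregator, a cost-minimizing firm uses both inputs only if these two unit costs are equal.

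\textbf{Step 2: which primitives enter.} The right-hand side $\lambda k r_c$ contains no argument of the household labor supply $L_H^s$. The parameter $\lambda$ is a technology parameter, $k$ is a property of $\phi$, and $r_c$ clears the compute market. The only channel by which $L_H^s$ could reach $W_H$ is through $r_c$. Close that channel by taking $r_c$ as given by the compute capital market. This is exactly the paper's setup: firms face the rental rate $r_c$ and do not set it. On that basis, $L_H^s$ affects only the split of $L_{\text{eff}}$ between $L_H$ and $L_A$, not the price $W_H$.

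\textbf{Step 3: comparative statics in $L_H^s$.} Shift $L_H^s$ in either direction while the interior regime persists. Output demand pins down total effective labor $L_{\text{eff}}$. A larger supply of $L_H$ therefore displaces $L_A$ one-for-$\lambda$, and compute demand falls by $\lambda k$ per human unit added. As long as $r_c$ is unchanged, $W_H$ stays fixed. The labor market then clears on quantities rather than on price: the supply curve is met by a horizontal demand curve at height $\lambda k r_c$. This is the sense in which the price-setting margin has migrated.

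\textbf{Main obstacle.} The delicate point is general equilibrium feedback. A shift in $L_H^s$ changes compute demand, which can move $r_c$ when compute supply is only moderately elastic. The claim should therefore be stated precisely: \emph{conditional on} $r_c$, $L_H^s$ does not appear in the binding condition, and any influence of $L_H^s$ on wages runs through the compute market. That indirect channel still fits the corollary, since the determination is then made in the compute market. Two boundary cases need to be acknowledged:
\begin{itemize}
\item In the corner $L_A=0$, the wage is set by $L_H^s$ in the usual way and lies strictly below the ceiling.
\item In the corner $L_H=0$, no human wage is realized at all.
\end{itemize}
The corollary therefore holds exactly in the coexistence regime, and in the corner regimes only as an upper bound.
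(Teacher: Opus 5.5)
Your proposal is correct and follows the paper's route. The paper gives no separate proof of the corollary: it is the equality case $W_H=\lambda k r_c$ of Proposition~\ref{prop:caw}, obtained from the linear cost minimization in Appendix~B, read as a binding condition that involves only $(\lambda,k,r_c)$ and not $L_H^s$. Your conditional-on-$r_c$ caveat and your corner cases also match the paper's discussion of Figure~\ref{fig:migration} and the closing remark of Appendix~B, with one small refinement: in the $L_A=0$ corner the wage is only weakly below the ceiling, since equality there is a knife-edge.
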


This is the formal content of our claim. Three clarifications are in order: wages do not collapse to zero but to $\lambda k r_c$, which can be high or low depending on compute-market conditions; human workers need not become unemployed, since they may relocate to complementary tasks (Section~\ref{sec:tasks}); and the bound applies only on tasks where Assumption~\ref{assn:perfect} approximately holds, not in all sectors.

\section{CES Generalization: Imperfect Substitution}

Perfect substitution overstates the case. Generalize via constant elasticity of substitution. Let
\begin{equation}
L_{\text{eff}} = A\bigl[\alpha L_H^{\rho} + \beta L_A^{\rho}\bigr]^{1/\rho}, \qquad \sigma = \frac{1}{1-\rho}, \label{eq:ces}
\end{equation}
with $\sigma \in (0, \infty)$ the elasticity of substitution between human and agent cognitive labor, and CES weights $\alpha, \beta > 0$. The effective unit cost of an agent-produced labor is its compute cost, $W_A^{\text{eff}} \equiv k r_c$, since $L_A = K_c/k$ and the rental rate of $K_c$ is $r_c$. Cost minimization of \eqref{eq:ces} for one unit of $L_{\text{eff}}$ delivers the conditional factor demands and the relative-wage condition
\begin{equation}
\frac{W_H}{W_A^{\text{eff}}} = \frac{\alpha}{\beta}\left(\frac{L_H}{L_A}\right)^{\rho - 1} = \frac{\alpha}{\beta}\left(\frac{L_H}{L_A}\right)^{-1/\sigma}. \label{eq:relwage}
\end{equation}
With normalization $\alpha/\beta = \lambda$ on the perfect-substitute limit, \eqref{eq:relwage} reduces to $W_H = \lambda W_A^{\text{eff}}$ as $\sigma \to \infty$, recovering the CAW bound \eqref{eq:caw}.

\begin{proposition}[Compute-driven wage compression]\label{prop:compression}
Hold the human-labor supply $L_H^s$ and the level of demand for $L_{\text{eff}}$ fixed. Consider an exogenous increase in compute capital supply $\bar K_c$ (equivalently, a fall in $k$ via $\phi$ improvement) at a fixed rental rate $r_c$ that lowers $W_A^{\text{eff}}$. Then in the new equilibrium the human cognitive wage falls with semi-elasticity
\begin{equation}
\frac{\partial \log W_H}{\partial \log W_A^{\text{eff}}} \;=\; 1 - \frac{1}{\sigma}\cdot\frac{\partial \log(L_H/L_A)}{\partial \log W_A^{\text{eff}}},
\end{equation}
and in the polar limit of perfectly elastic $L_H^s$ this collapses to $\partial \log W_H / \partial \log W_A^{\text{eff}} = 1$. As $\sigma \to \infty$, the CES bound collapses to the perfect-substitute CAW bound \eqref{eq:caw}; as $\sigma \to 0$ (Leontief), $L_H$ and $L_A$ are used in fixed proportion, the binding factor is whichever is shorter in supply, and the comparative-static channel from $W_A^{\text{eff}}$ to $W_H$ vanishes.
\end{proposition}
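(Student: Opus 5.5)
The starting point is the relative-wage condition \eqref{eq:relwage}, which holds at any interior cost-minimizing allocation with $L_H, L_A > 0$. Taking logs gives
\begin{equation*}
\log W_H = \log W_A^{\text{eff}} + \log\frac{\alpha}{\beta} - \frac{1}{\sigma}\log\frac{L_H}{L_A}.
\end{equation*}
The shock is modeled as a one-parameter perturbation: $W_A^{\text{eff}} = k r_c$ falls through $k$ (or through $\bar K_c$ at fixed $r_c$), while $\alpha,\beta,\sigma$ are fixed. Differentiating totally with respect to $\log W_A^{\text{eff}}$ along the equilibrium path, with $L_H/L_A$ treated as an endogenous function of the shock, gives the displayed semi-elasticity directly.

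One point needs care. Read literally with a minus sign, the displayed formula requires $\partial \log(L_H/L_A)/\partial\log W_A^{\text{eff}}$ to be evaluated in equilibrium. The sign of that term follows from cost minimization: a cheaper agent input raises $L_A$ relative to $L_H$, so $L_H/L_A$ moves with $W_A^{\text{eff}}$. I would verify this from the conditional factor demands of \eqref{eq:ces} for one unit of $L_{\text{eff}}$, where $L_H/L_A = (\alpha W_A^{\text{eff}}/(\beta W_H))^{\sigma}$.

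The equilibrium is closed with the supply side. Demand for $L_{\text{eff}}$ is held fixed. $L_A$ adjusts elastically at the fixed $r_c$, and $L_H$ is pinned by $L_H^s(W_H)$. The statement that "the human cognitive wage falls" would then be shown from the two conditions together. Suppose $W_H$ rose or stayed constant while $W_A^{\text{eff}}$ fell. Then the first-order condition would force $L_H/L_A$ down, and fixed $L_{\text{eff}}$ together with upward-sloping $L_H^s$ yields a contradiction. So $W_H$ must fall.

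The polar cases come next.
\begin{itemize}
\item \textbf{Perfectly elastic $L_H^s$.} $W_H$ cannot deviate from $\lambda W_A^{\text{eff}}$ as long as both factors remain employed, so the ratio $W_H/W_A^{\text{eff}}$ is invariant. This is exactly the condition that the quantity-ratio term times $1/\sigma$ vanishes, which gives the semi-elasticity $1$.
\item \textbf{$\sigma\to\infty$.} Here $\rho\to 1$. With the normalization $\alpha/\beta=\lambda$, \eqref{eq:relwage} becomes $W_H=\lambda W_A^{\text{eff}}$. At corners, Proposition~\ref{prop:caw} supplies the inequality version, which recovers \eqref{eq:caw}.
\item \textbf{$\sigma\to 0$.} The CES aggregator converges to $\min\{\alpha' L_H, \beta' L_A\}$, so inputs are used in fixed proportions. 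If $L_H$ is the scarce input, $W_H$ is set by its shadow value in producing $L_{\text{eff}}$, net of the fixed-proportion agent cost. If $L_A$ is scarce, human labor is in excess and its wage is pinned by the supply side. In either case a marginal change in $W_A^{\text{eff}}$ alters no input ratio, so the substitution channel $\frac{1}{\sigma}\,\partial\log(L_H/L_A)$ has nothing to act on.
\end{itemize}

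I expect the main obstacle to be the $\sigma\to 0$ claim. In the product $\frac{1}{\sigma}\partial\log(L_H/L_A)$, the first factor diverges while the second goes to zero. Under the Leontief limit the first-order condition \eqref{eq:relwage} no longer determines wages, so the limit has to be argued from the kink directly rather than from the formula.

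My first approach would be to compute the product explicitly under a constant-elasticity $L_H^s$ and show it tends to the Leontief benchmark. If the limit is not clean, I would instead state the Leontief case as a separate direct argument: when $L_H$ binds, the accounting identity $P\cdot\partial F/\partial L_{\text{eff}} = \alpha'^{-1}W_H+\beta'^{-1}W_A^{\text{eff}}$ shows that the incidence of a change in agent cost falls on output and rents rather than operating through substitution.
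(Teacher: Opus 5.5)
Your core step, log-differentiating \eqref{eq:relwage} along the equilibrium path, is the paper's own argument in its CES-algebra appendix, and it is fine: the displayed formula is an identity at any interior allocation. Your contradiction argument that $W_H$ falls is also correct whenever $L_H^s$ is a nondecreasing function of $W_H$. The genuine gap is the perfectly elastic polar case. You justify it by saying ``$W_H$ cannot deviate from $\lambda W_A^{\text{eff}}$ as long as both factors remain employed,'' but that is the $\sigma=\infty$ arbitrage condition of Proposition~\ref{prop:caw}. For finite $\sigma$ the first-order condition only gives $W_H/W_A^{\text{eff}}=(\alpha/\beta)(L_H/L_A)^{-1/\sigma}$. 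Invariance of $W_H/W_A^{\text{eff}}$ is therefore equivalent to a frozen input ratio, and elastic human supply does not freeze it. A horizontal $L_H^s$ pins $W_H$ itself at the supply price. Then $d\log W_H=0$, the ratio moves with elasticity $\sigma$, and the displayed formula evaluates to $1-1=0$, not $1$. Your own monotonicity argument signals this: it needs $L_H$ to be determined by $W_H$, and it breaks exactly when the supply curve is horizontal, where $W_H$ cannot fall at all. The paper's appendix asserts the same ``tracks one-for-one'' claim without derivation. So you have reproduced its step, but the step is false.

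The explicit computation you planned for $\sigma\to0$ settles all the limits at once and exposes the error. Take $L_H^s$ with constant elasticity $\eta$ and hold the quantity of $L_{\text{eff}}$ fixed. Combine $s_H\,d\log L_H+s_A\,d\log L_A=0$ (with $s_H,s_A$ the CES cost shares), $d\log L_H=\eta\,d\log W_H$, and $d\log(L_H/L_A)=\sigma\,(d\log W_A^{\text{eff}}-d\log W_H)$ to get
\[
\frac{d\log W_H}{d\log W_A^{\text{eff}}}=\frac{\sigma s_A}{\sigma s_A+\eta},
\]
which lies in $(0,1)$ for $0<\eta<\infty$. The limits are as follows.
\begin{itemize}
\item It equals $1$ in the perfectly \emph{inelastic} case $\eta=0$. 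There $L_H$ is pinned, and through the fixed $L_{\text{eff}}$ so is $L_A$; the ratio is frozen and $W_H$ scales with $W_A^{\text{eff}}$.
\item It tends to $0$ as $\eta\to\infty$.
\item It tends to $1$ as $\sigma\to\infty$ at interior shares.
\item It tends to $0$ as $\sigma\to0$ only if $\eta>0$.
\end{itemize}
So the polar-limit clause is true only with ``elastic'' replaced by ``inelastic,'' and the Leontief clause needs $\eta>0$. For the Leontief case, argue from the quantity closure: fixed $L_{\text{eff}}=\bar L$ forces $L_H=\bar L/\alpha'$, so $W_H=(L_H^s)^{-1}(\bar L/\alpha')$ is independent of $W_A^{\text{eff}}$. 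Your fallback through $P\cdot\partial F/\partial L_{\text{eff}}=\alpha'^{-1}W_H+\beta'^{-1}W_A^{\text{eff}}$ instead holds the price of $L_{\text{eff}}$ fixed. Under that closure a cheaper $W_A^{\text{eff}}$ \emph{raises} $W_H$, because the scarce factor captures the saving, which is the opposite of the channel vanishing.
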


The CES form makes precise an empirically tractable claim: \emph{the magnitude of CAW pressure on a given task is governed by the elasticity of substitution between human and agent cognitive labor on that task}. This is the right object for empirical estimation, replacing the binary ``AI replaces / does not replace'' framing common in policy discourse. \citet{eloundou2023}, \citet{felten2023}, \citet{brynjolfsson2023}, and \citet{noyzhang2023} provide the natural empirical input.

\section{A Numerical Calibration of CAW}\label{sec:calib}

To give the bound \eqref{eq:caw} empirical traction, we now plug in plausible 2024--2025 numbers. The exercise is illustrative, not estimation; the goal is to show that $\lambda k r_c$ takes economically meaningful values that vary by orders of magnitude across tasks.

\paragraph{Compute rental rate $r_c$.} On-demand H100 GPU rental from major cloud providers traded in the $\$2$--$\$5$/GPU-hour range in 2024, with multi-year contract pricing closer to $\$1.50$/GPU-hour. We take $r_c = \$2$/GPU-hour as a midpoint. Frontier-model inference is typically priced per million tokens; converting to GPU-hour-equivalents using public throughput benchmarks for a 70B-class model on an H100 yields roughly the same order of magnitude.

\paragraph{Compute intensity $k$.} For a frontier reasoning agent producing sustained, high-quality output, current inference stacks consume on the order of $0.5$--$2$ H100-hours of compute to deliver one ``hour'' of effective senior-knowledge-worker output, depending on whether the workload is interactive (heavy KV-cache reuse) or batched. We take $k = 1$ H100-hour per agent-labor-hour for a frontier model and $k = 0.05$ H100-hour per agent-labor-hour for a small distilled model on a substitutable subtask.

\paragraph{Productivity ratio $\lambda$.} The empirical literature on LLM productivity gains \citep{brynjolfsson2023, noyzhang2023} reports time savings of 14--40\% on substitutable tasks, with quality at or above human baseline. We take $\lambda \in \{0.5, 1.0, 2.0\}$ to span the cases where agents are absolutely more productive ($\lambda < 1$), at parity ($\lambda = 1$), and where humans retain a productivity edge ($\lambda > 1$).

\paragraph{Implied CAW.} Combining these, the bound $W_H \leq \lambda k r_c$ implies the per-hour ceilings shown in Table~\ref{tab:calib}.

\begin{table}[h]
\centering
\small
\begin{tabular}{lccc}
\toprule
& Distilled small model & Frontier model & Frontier model \\
& ($k=0.05$, $r_c=\$2$) & ($k=1$, $r_c=\$2$) & ($k=1$, $r_c=\$5$) \\
\midrule
$\lambda = 0.5$ (agent-favored) & \$0.05/h & \$1.00/h & \$2.50/h \\
$\lambda = 1.0$ (parity)        & \$0.10/h & \$2.00/h & \$5.00/h \\
$\lambda = 2.0$ (human-favored) & \$0.20/h & \$4.00/h & \$10.00/h \\
\bottomrule
\end{tabular}
\caption{Illustrative CAW ceiling $\lambda k r_c$ in US\$/hour on substitutable cognitive tasks, under 2024--2025 compute prices. Read each cell as: \emph{the binding human wage on tasks where this $(\lambda, k, r_c)$ configuration applies}.}
\label{tab:calib}
\end{table}

Two implications are immediate. First, on tasks where small distilled models suffice (high-volume classification, summarization, first-pass document review), CAW is already binding well below any plausible human reservation wage; the wage on such tasks is effectively pinned at the marginal-product floor. Second, on tasks requiring frontier-model reasoning, CAW currently sits between roughly \$1 and \$10/hour. Any human cognitive labor on tasks where Assumption~\ref{assn:perfect} approximately holds and the frontier model is competent must be priced under that ceiling to retain employment. As $\phi$ improves and $k$ falls, every cell of Table~\ref{tab:calib} moves down monotonically.

\paragraph{Sensitivity.} The ceiling is linear in each of $\lambda$, $k$, and $r_c$. A doubling of compute prices (e.g., from a supply shock or geopolitical disruption) doubles CAW; a halving of $k$ from algorithmic improvement halves it. Thus the empirical content of the framework is the joint trajectory $(k_t, r_{c,t})$ on a task-by-task basis, with $\sigma$ governing how rapidly the relevant occupational wage tracks that trajectory.

\section{Visualizing the Migration of the Price-Setter}

Figure~\ref{fig:migration} traces the migration of the price-setter across three panels: the textbook labor market in (a), the compute capital market in (b), and the CAW-anchored cognitive labor market on $T_S$ in (c). The reading order is important. Agent labor has \emph{no} household supply curve; its supply is derived from the supply of compute capital $K_c$, which in the short run is steep (panel (b)) due to fab capacity, energy, and data-center lead times. Compute demand therefore pins the rental rate $r_c^{*}$ in (b). Conditional on that $r_c^{*}$, the horizontal line at $\bar W_H = \lambda k r_c^{*}$ in panel (c) is a wage \emph{ceiling} on human cognitive labor on $T_S$, not an agent-labor supply curve: it does not assert that agent supply is infinitely elastic. The human-labor supply curve $L_H^s$ is drawn but does not determine the equilibrium wage on $T_S$. In general equilibrium, shifts in compute demand move $r_c^{*}$ in (b), and the CAW line in (c) shifts in lockstep.

\begin{figure}[h]
\centering
{\definecolor{uiucorange}{HTML}{E84A27}%
\begin{tikzpicture}[scale=0.7, every node/.style={font=\small}]

\begin{scope}[xshift=0cm]
\draw[->,thick] (0,0) -- (5.5,0) node[right] {$L$};
\draw[->,thick] (0,0) -- (0,4.5) node[above] {$W$};
\draw[uiucblue,very thick] (0.3,4) -- (5,0.5) node[right,black] {$L^d$};
\draw[uiucblue,very thick] (0.5,0.3) -- (5,4) node[right,black] {$L^s$};
\draw[dashed] (0,2.15) node[left] {$W^*$} -- (2.71,2.15);
\draw[dashed] (2.71,0) node[below] {$L^*$} -- (2.71,2.15);
\fill[uiucblue] (2.71,2.15) circle (2pt);
\node[above=0.3cm] at (2.75,4.6) {\textbf{(a) Classic labor market}};
\node[align=center,below=0.1cm] at (2.75,-0.6) {Wage set by\\ $L^d \cap L^s$};
\end{scope}

\begin{scope}[xshift=7.5cm]
\draw[->,thick] (0,0) -- (5.5,0) node[right] {$K_c$};
\draw[->,thick] (0,0) -- (0,4.5) node[above] {$r_c$};
\draw[red,very thick] (3.0,0.3) -- (3.4,4.2) node[right,black] {$K_c^{s}$ (short-run)};
\draw[uiucblue,very thick] (0.3,4) -- (5,0.5) node[right,black] {$K_c^{d}$};
\draw[dashed] (0,1.85) node[left] {$r_c^{*}$} -- (3.2,1.85);
\draw[dashed] (3.2,0) node[below] {$\bar K_c$} -- (3.2,1.85);
\fill[red] (3.2,1.85) circle (2pt);
\node[above=0.3cm] at (2.75,4.6) {\textbf{(b) Compute capital market}};
\node[align=center,below=0.1cm] at (2.75,-0.6) {Inelastic short-run supply\\ pins $r_c^{*}$};
\end{scope}

\begin{scope}[xshift=15cm]
\draw[->,thick] (0,0) -- (5.5,0) node[right] {$L_H$};
\draw[->,thick] (0,0) -- (0,4.5) node[above] {$W_H$};
\draw[uiucblue,very thick] (0.3,4) -- (5,0.5) node[right,black] {$L_H^d$};
\draw[uiucblue,very thick,dashed] (0.5,0.3) -- (5,4) node[right,black] {$L_H^s$};
\draw[red,very thick] (0,1.85) -- (5,1.85) node[right,red] {CAW: $\bar W_H = \lambda k r_c^{*}$};
\draw[dashed] (0,1.85) -- (3.19,1.85);
\draw[dashed] (3.19,0) node[below] {$L_H^d(\bar W_H)$} -- (3.19,1.85);
\fill[red] (3.19,1.85) circle (2pt);
\node[above=0.3cm] at (2.75,4.6) {\textbf{(c) Cognitive labor on $T_S$}};
\node[align=center,below=0.1cm] at (2.75,-0.6) {Wage \emph{ceiling} given $r_c^{*}$;\\ not an agent-labor supply curve};
\end{scope}

\draw[->,very thick,red,decorate,decoration={snake,amplitude=1pt,segment length=6pt}]
  (11.2,1.85) -- (14.5,1.85);
\node[red,align=center,font=\scriptsize] at (12.85,2.3) {$r_c^{*}\!\mapsto\!\lambda k r_c^{*}$};

\draw[->,very thick,red,decorate,decoration={snake,amplitude=1pt,segment length=8pt}]
  (5.6,5.9) to[bend left=10] (15.1,5.9);
\node[align=center,red,font=\small] at (10.4,6.8) {migration of the price-setter};

\end{tikzpicture}}
\caption{Migration of the price-setting margin. \textbf{(a)} In the classic framework the wage is determined by $L^d \cap L^s$. \textbf{(b)} The compute capital market: short-run supply $K_c^{s}$ is steep (inelastic) due to fab capacity, energy, and data-center lead times, so compute demand $K_c^{d}$ pins the equilibrium rental rate $r_c^{*}$. \textbf{(c)} Cognitive labor on substitutable tasks $T_S$, drawn \emph{conditional on the equilibrium $r_c^{*}$ inherited from panel~(b)}: the human-labor supply curve $L_H^s$ is no longer the binding constraint (drawn dashed); the wage is capped by the horizontal line $\bar W_H = \lambda k r_c^{*}$, and employment is $L_H^d(\bar W_H)$. The horizontal CAW line is \emph{not} an agent-labor supply curve, and the framework does not assume that agent labor is in infinitely elastic supply. It is the wage \emph{ceiling} conditional on $r_c^{*}$. In general equilibrium, any shift in compute demand re-prices $r_c^{*}$ in panel~(b); the CAW line in panel~(c) then shifts in step.}
\label{fig:migration}
\end{figure}

\section{Task Heterogeneity: A Directional Inversion of Skill-Biased Technical Change (SBTC)}\label{sec:tasks}

\citet{katz1992} and the subsequent skill-biased technical change (SBTC) literature document that information technology has historically complemented high-skill cognitive labor and substituted for routine labor \citep{autor2003,autor2013,autor2015}. The CAW framework predicts a directional inversion \emph{within} cognitive labor.

We partition tasks into two sets. The first is the set of \emph{substitutable cognitive tasks} $T_S$, comprising drafting, code generation against specifications, summarization, first-pass analysis, scheduling, retrieval, and classification. On $T_S$, $\sigma$ is large, CAW binds, and $W_H$ is anchored by $\lambda k r_c$; the empirical exposure scores of \citet{eloundou2023} and \citet{felten2023} provide useful proxies for membership in $T_S$. The second is the set of \emph{complementary cognitive tasks} $T_C$, comprising judgment under deep uncertainty, accountability and legal liability, relational and political work, cross-domain integration, taste, principal-agent monitoring, and any task where $\partial F / \partial L_H$ is increasing in $L_A$. On $T_C$, $\sigma$ is small or the cross-partial $\partial^2 F / \partial L_H \partial L_A > 0$, so that $W_H$ \emph{rises} with $L_A$.

The wage distribution across cognitive workers therefore widens, but the relevant axis is no longer traditional skill; it is the $T_S$/$T_C$ exposure mix of each occupation. To make this concrete, consider two occupations with similar formal credentials. A junior contract-review paralegal performs work that is roughly 80\% on $T_S$ (clause extraction, redlining against templates, summary memos) and 20\% on $T_C$ (escalation judgment). A senior litigation associate, by contrast, performs work that is roughly 30\% on $T_S$ (document review, brief drafting) and 70\% on $T_C$ (case strategy, client management, courtroom work). Under CAW the paralegal's wage is dominated by $\lambda k r_c$ on the substitutable component and is squeezed downward as $k$ falls, whereas the associate's wage is dominated by complementary tasks and may rise. This is consistent with the early empirical findings of \citet{brynjolfsson2023} that productivity gains are concentrated in less-experienced workers, but it does not by itself imply that less-experienced workers gain in compensation, since the same tasks that they used to monopolize have been priced down. Two occupations with identical traditional skill requirements but different $T_S$/$T_C$ shares will diverge sharply in compensation. This is a testable cross-sectional prediction distinct from canonical SBTC.

\section{Macro Implications: Factor Shares}

Aggregating across tasks, define the labor share $s_L = W_H L_H / Y$. As compute substitutes for human cognitive labor on $T_S$, the wage bill $W_H L_H$ shrinks on those tasks, while the compute rental bill $r_c K_c$ grows. The capital share rises. Recipients of the rising capital share are owners of compute infrastructure, energy producers, and holders of model intellectual property; these need not coincide with the historical owners of physical capital.

This connects the CAW framework to the literature on declining labor shares \citep{karabarbounis2014,autorvanreenen2020} and the long-run dynamics emphasized by \citet{piketty2014}, with an important refinement: under CAW, the capital share rises specifically through the \emph{compute} channel. Policy interventions targeting that channel (compute taxation, public compute provision, antitrust on accelerator markets, energy policy) have first-order effects on the cognitive-labor wage distribution that interventions targeting the labor market do not. We discuss four such levers in turn.

\paragraph{Compute taxation.} A tax on $r_c$ raises the CAW ceiling proportionally. Incidence depends on the elasticity of compute supply and on the elasticity of demand for substitutable cognitive output. With moderately inelastic compute supply in the short run, much of the tax falls on compute owners; in the long run, with more elastic capacity expansion, incidence shifts toward output prices and ultimately toward human wages on $T_S$ via the bound. Pigouvian arguments based on energy externalities and Ramsey arguments based on the relative inelasticity of $K_c$ supply both push toward positive optimal $\tau_c > 0$.

\paragraph{Public compute provision.} A public option that supplies compute at marginal cost compresses the markup component of $r_c$ and tightens the CAW ceiling. The effect is symmetric to compute taxation in sign on $r_c$ but distributionally different: public provision lowers $W_H$ on $T_S$, raising the consumer surplus of cognitive output buyers without redistributing to workers on $T_S$.

\paragraph{Antitrust on accelerator markets.} If the accelerator market is concentrated, $r_c$ contains a markup over marginal cost. Antitrust enforcement that erodes that markup again lowers the CAW ceiling. The substantive question is whether the resulting consumer-surplus gains in cognitive output exceed the wage compression on $T_S$; under standard welfare assumptions they do, but the distributional consequences are large.

\paragraph{Energy policy.} A binding fraction of $r_c$ is electricity cost. Policy that lowers the levelized cost of electricity to data centers (transmission build-out, nuclear permitting, renewable subsidies) operates through $r_c$ in the same direction as capacity expansion. Because CAW is a price relationship that propagates through tradable cognitive output, its incidence is largely national to global rather than local, even when underlying electricity costs vary regionally.

\section{Limitations and Boundary Conditions}\label{sec:limits}

Several caveats would invalidate or complicate the argument and deserve discussion.

\paragraph{Jevons effects.} A fall in the unit cost of cognitive labor may expand demand for cognitive output enough to raise total $L_H$ employment even on $T_S$ tasks. CAW bounds the \emph{wage}, not the wage \emph{bill}, so whether total compensation rises or falls depends on the demand elasticity for cognitive output.

\paragraph{Ricardian comparative advantage.} Even if agents are absolutely more productive on every task, humans retain employment via comparative advantage. Comparative advantage, however, determines \emph{allocation} rather than \emph{price}, so that wages on the substitutable margin remain anchored.

\paragraph{Non-productivity wage components.} Liability, accountability, signaling, trust, and physical co-presence add a non-marginal-product premium to human labor that the production-function setup does not capture. CAW bounds the marginal-product component of wages, not the total.

\paragraph{Compute-market structure.} The derivation assumes competitive compute markets. If compute is monopolized, vertically integrated with model providers, or politically rationed, then $r_c$ is no longer a competitive rental rate and the bound is replaced by a markup-adjusted version. The argument then becomes a claim about \emph{political economy} as much as about prices.

\paragraph{Endogenous $\phi$.} Algorithmic improvement reduces $k$ over time, so that CAW is a moving target, declining secularly even at constant $r_c$. The relevant long-run object is the joint trajectory of $(k_t, r_{c,t})$. As a first-pass dynamic statement, holding $r_c$ constant and assuming exponential improvement in $\phi$ at rate $g$, the implied CAW trajectory is $\bar W_H(t) = \lambda k_0 e^{-g t} r_c$, which converges to zero as $t \to \infty$ unless arrested by hardware bottlenecks or model-quality saturation.

\paragraph{Endogenous task boundaries.} The partition $T_S \cup T_C$ shifts with the capability frontier, with tasks migrating from $T_C$ to $T_S$ as agents improve. The empirical content of the framework therefore depends on a measurable, time-indexed task taxonomy.

\paragraph{Heterogeneity of $K_c$.} As noted in the remark on $K_c$ heterogeneity in the reformulation, $K_c$ is a composite of physical compute (rival, competitive), model-weight IP (non-rival, often monopolistic), and sunk training capital. A more refined version of CAW would carry these as separate factors with their own pricing equations.

\section{Conclusion}

Our claim can be stated in one sentence: \emph{on tasks where AI agents substitute for human cognitive labor, the equilibrium wage ceiling is set in the compute capital market, not the labor market.} The standard textbook framework already contains all the machinery needed to see this; the only required correction is to recognize agents as a capital-to-labor conversion technology rather than a labor input. Once that correction is made, the Compute-Anchored Wage bound $W_H \leq \lambda k r_c$ follows directly from competitive cost minimization, and a number of empirical and policy questions reorient accordingly: the relevant elasticity to estimate is $\sigma$ across task categories; the relevant macro outcome is the compute share of capital income; the relevant policy levers are compute-market levers, not labor-market levers. The numerical calibration in Section~\ref{sec:calib} suggests CAW is already binding on the high-volume substitutable margin and approaching binding on frontier-substitutable tasks at present compute prices.

\clearpage

\bibliographystyle{plainnat}
\bibliography{main}

@book{mankiw2020,
  author    = {Mankiw, N. Gregory},
  title     = {Principles of Economics},
  edition   = {9},
  publisher = {Cengage Learning},
  year      = {2020}
}

@article{katz1992,
  author  = {Katz, Lawrence F. and Murphy, Kevin M.},
  title   = {Changes in Relative Wages, 1963--1987: Supply and Demand Factors},
  journal = {Quarterly Journal of Economics},
  volume  = {107},
  number  = {1},
  pages   = {35--78},
  year    = {1992}
}

@article{autor2003,
  author  = {Autor, David H. and Levy, Frank and Murnane, Richard J.},
  title   = {The Skill Content of Recent Technological Change: An Empirical Exploration},
  journal = {Quarterly Journal of Economics},
  volume  = {118},
  number  = {4},
  pages   = {1279--1333},
  year    = {2003}
}

@article{autor2013,
  author  = {Autor, David H. and Dorn, David},
  title   = {The Growth of Low-Skill Service Jobs and the Polarization of the {US} Labor Market},
  journal = {American Economic Review},
  volume  = {103},
  number  = {5},
  pages   = {1553--1597},
  year    = {2013}
}

@article{autor2015,
  author  = {Autor, David H.},
  title   = {Why Are There Still So Many Jobs? The History and Future of Workplace Automation},
  journal = {Journal of Economic Perspectives},
  volume  = {29},
  number  = {3},
  pages   = {3--30},
  year    = {2015}
}

@book{goldinkatz2008,
  author    = {Goldin, Claudia and Katz, Lawrence F.},
  title     = {The Race Between Education and Technology},
  publisher = {Harvard University Press},
  year      = {2008}
}

@article{korv2000,
  author  = {Krusell, Per and Ohanian, Lee E. and R\'ios-Rull, Jos\'e-V\'ictor and Violante, Giovanni L.},
  title   = {Capital-Skill Complementarity and Inequality: A Macroeconomic Analysis},
  journal = {Econometrica},
  volume  = {68},
  number  = {5},
  pages   = {1029--1053},
  year    = {2000}
}

@article{acemoglu2018,
  author  = {Acemoglu, Daron and Restrepo, Pascual},
  title   = {The Race Between Man and Machine: Implications of Technology for Growth, Factor Shares, and Employment},
  journal = {American Economic Review},
  volume  = {108},
  number  = {6},
  pages   = {1488--1542},
  year    = {2018}
}

@article{acemoglu2019,
  author  = {Acemoglu, Daron and Restrepo, Pascual},
  title   = {Automation and New Tasks: How Technology Displaces and Reinstates Labor},
  journal = {Journal of Economic Perspectives},
  volume  = {33},
  number  = {2},
  pages   = {3--30},
  year    = {2019}
}

@article{acemoglu2020,
  author  = {Acemoglu, Daron and Restrepo, Pascual},
  title   = {Robots and Jobs: Evidence from {US} Labor Markets},
  journal = {Journal of Political Economy},
  volume  = {128},
  number  = {6},
  pages   = {2188--2244},
  year    = {2020}
}

@article{acemoglu2022,
  author  = {Acemoglu, Daron and Restrepo, Pascual},
  title   = {Tasks, Automation, and the Rise in {U.S.} Wage Inequality},
  journal = {Econometrica},
  volume  = {90},
  number  = {5},
  pages   = {1973--2016},
  year    = {2022}
}

@incollection{aghion2017,
  author    = {Aghion, Philippe and Jones, Benjamin F. and Jones, Charles I.},
  title     = {Artificial Intelligence and Economic Growth},
  booktitle = {The Economics of Artificial Intelligence: An Agenda},
  editor    = {Agrawal, Ajay and Gans, Joshua and Goldfarb, Avi},
  publisher = {University of Chicago Press},
  year      = {2019},
  pages     = {237--282}
}

@incollection{korinek2019,
  author    = {Korinek, Anton and Stiglitz, Joseph E.},
  title     = {Artificial Intelligence and Its Implications for Income Distribution and Unemployment},
  booktitle = {The Economics of Artificial Intelligence: An Agenda},
  editor    = {Agrawal, Ajay and Gans, Joshua and Goldfarb, Avi},
  publisher = {University of Chicago Press},
  year      = {2019},
  pages     = {349--390}
}

@article{korinek2023,
  author  = {Korinek, Anton},
  title   = {Language Models and Cognitive Automation for Economic Research},
  journal = {NBER Working Paper No.~30957},
  year    = {2023}
}

@article{trammellkorinek2023,
  author  = {Trammell, Philip and Korinek, Anton},
  title   = {Economic Growth under Transformative {AI}},
  journal = {Annual Review of Economics},
  volume  = {15},
  pages   = {567--593},
  year    = {2023}
}

@article{bresnahan1995,
  author  = {Bresnahan, Timothy F. and Trajtenberg, Manuel},
  title   = {General Purpose Technologies: ``{E}ngines of Growth''?},
  journal = {Journal of Econometrics},
  volume  = {65},
  number  = {1},
  pages   = {83--108},
  year    = {1995}
}

@article{goldfarb2023,
  author  = {Goldfarb, Avi and Taska, Bledi and Teodoridis, Florenta},
  title   = {Could Machine Learning Be a General Purpose Technology? A Comparison of Emerging Technologies Using Data from Online Job Postings},
  journal = {Research Policy},
  volume  = {52},
  number  = {1},
  pages   = {104653},
  year    = {2023}
}

@article{eloundou2023,
  author  = {Eloundou, Tyna and Manning, Sam and Mishkin, Pamela and Rock, Daniel},
  title   = {{GPTs} Are {GPTs}: An Early Look at the Labor Market Impact Potential of Large Language Models},
  journal = {arXiv preprint arXiv:2303.10130},
  year    = {2023}
}

@article{felten2023,
  author  = {Felten, Edward W. and Raj, Manav and Seamans, Robert},
  title   = {How Will Language Modelers Like {ChatGPT} Affect Occupations and Industries?},
  journal = {arXiv preprint arXiv:2303.01157},
  year    = {2023}
}

@article{brynjolfsson2023,
  author  = {Brynjolfsson, Erik and Li, Danielle and Raymond, Lindsey R.},
  title   = {Generative {AI} at Work},
  journal = {NBER Working Paper No.~31161},
  year    = {2023}
}

@article{noyzhang2023,
  author  = {Noy, Shakked and Zhang, Whitney},
  title   = {Experimental Evidence on the Productivity Effects of Generative Artificial Intelligence},
  journal = {Science},
  volume  = {381},
  number  = {6654},
  pages   = {187--192},
  year    = {2023}
}

@article{sevilla2022,
  author  = {Sevilla, Jaime and Heim, Lennart and Ho, Anson and Besiroglu, Tamay and Hobbhahn, Marius and Villalobos, Pablo},
  title   = {Compute Trends Across Three Eras of Machine Learning},
  journal = {2022 International Joint Conference on Neural Networks (IJCNN)},
  year    = {2022}
}

@article{cottier2024,
  author  = {Cottier, Ben and Rahman, Robi and Fattorini, Loredana and Maslej, Nestor and Owen, David},
  title   = {The Rising Costs of Training Frontier {AI} Models},
  journal = {arXiv preprint arXiv:2405.21015},
  year    = {2024}
}

@article{karabarbounis2014,
  author  = {Karabarbounis, Loukas and Neiman, Brent},
  title   = {The Global Decline of the Labor Share},
  journal = {Quarterly Journal of Economics},
  volume  = {129},
  number  = {1},
  pages   = {61--103},
  year    = {2014}
}

@book{piketty2014,
  author    = {Piketty, Thomas},
  title     = {Capital in the Twenty-First Century},
  publisher = {Harvard University Press},
  year      = {2014}
}

@article{autorvanreenen2020,
  author  = {Autor, David and Dorn, David and Katz, Lawrence F. and Patterson, Christina and Van Reenen, John},
  title   = {The Fall of the Labor Share and the Rise of Superstar Firms},
  journal = {Quarterly Journal of Economics},
  volume  = {135},
  number  = {2},
  pages   = {645--709},
  year    = {2020}
}

@book{susskind2020,
  author    = {Susskind, Daniel},
  title     = {A World Without Work: Technology, Automation, and How We Should Respond},
  publisher = {Metropolitan Books},
  year      = {2020}
}

\clearpage

\beginappendix

\section{Notation Summary}

For convenience we collect the symbols used throughout the paper.

\begin{itemize}[leftmargin=*]
\item $Y$: aggregate output of the representative competitive firm.
\item $K$, $L$: generic physical capital and labor in the \citet{mankiw2020} setup.
\item $K_o$: non-compute physical capital.
\item $K_c$: compute capital (GPUs, accelerators, data-center capacity, energy, model-weight IP rents).
\item $r_c$: competitive rental rate of compute capital, in \$/compute-unit-hour.
\item $L_H$: human cognitive labor, in labor-hours.
\item $L_A$: agent (cognitive-labor-equivalent) units, satisfying $L_A = \phi(K_c) \approx K_c / k$.
\item $k$: compute intensity, defined as units of compute capital required per effective agent-labor unit, in compute-units per agent-labor-hour.
\item $\phi$: capital-to-labor conversion technology embedding model architecture and inference stack.
\item $\lambda$: relative human-to-agent productivity on the substitutable task set, where one human-labor unit produces the same effective output as $\lambda$ agent-labor units. Values $\lambda > 1$ correspond to humans being more productive per unit, while $\lambda < 1$ corresponds to agents dominating.
\item $W_H$: human cognitive wage, in \$/labor-hour.
\item $W_A^{\text{eff}} \equiv k r_c$: effective agent unit wage, in \$/agent-labor-hour.
\item $\sigma$: elasticity of substitution between human and agent cognitive labor in the CES aggregator.
\item $T_S$, $T_C$: substitutable and complementary cognitive task sets.
\end{itemize}

\section{Detailed Derivation of the CAW Bound}

We expand the proof sketch of Proposition~\ref{prop:caw}. Under Assumption~\ref{assn:perfect}, the effective cognitive-labor input on the substitutable task set is $L_{\text{eff}} = L_H + \lambda^{-1} L_A$. A profit-maximizing firm chooses $(L_H, L_A)$ to minimize the cost of producing one unit of $L_{\text{eff}}$:
\[
\min_{L_H,\, L_A \geq 0} \; W_H L_H + r_c \, k\, L_A \quad \text{s.t.} \quad L_H + \lambda^{-1} L_A \geq 1.
\]
Since the constraint is linear and the objective is linear, the solution is a corner whenever the per-unit cost $W_H$ of $L_H$ and the per-unit cost $\lambda k r_c$ of effective labor delivered through $L_A$ are unequal. If $W_H < \lambda k r_c$, the firm substitutes fully toward $L_H$, so that $L_A^d = 0$ on $T_S$. If instead $W_H > \lambda k r_c$, the firm substitutes fully toward $L_A$, so that $L_H^d = 0$ on $T_S$; any positive supply of $L_H$ at this wage on this task is therefore unemployed, and the wage cannot be sustained in any equilibrium with $L_H > 0$ on $T_S$. Interior coexistence in turn requires $W_H = \lambda k r_c$. Across all cases consistent with positive employment, $W_H \leq \lambda k r_c$ on the substitutable margin, with equality whenever both factors are simultaneously employed. When $L_A = 0$ everywhere, for example because compute is unavailable or prohibitively expensive, the standard labor-market wage prevails and the bound is slack.

\section{CES Algebra}

Cost minimization of the CES aggregator \eqref{eq:ces} subject to producing one unit of $L_{\text{eff}}$ yields the conditional factor demands
\[
L_H = \frac{1}{A}\Bigl(\frac{\alpha}{W_H}\Bigr)^{\sigma} \Lambda, \qquad
L_A = \frac{1}{A}\Bigl(\frac{\beta}{W_A^{\text{eff}}}\Bigr)^{\sigma} \Lambda,
\]
where $\Lambda$ is the dual CES price index (a function of $W_H$ and $W_A^{\text{eff}}$ alone). Taking the ratio gives \eqref{eq:relwage}.

The two limits are immediate. As $\sigma \to \infty$ (perfect substitutes), the relative wage \eqref{eq:relwage} forces $W_H \to (\alpha/\beta)\, W_A^{\text{eff}}$, which under the normalization $\alpha/\beta = \lambda$ recovers the perfect-substitute CAW bound. As $\sigma \to 0$ (Leontief), the conditional demand for $L_H$ is fixed by the technology in proportion to $L_A$, so $L_H$ and $L_A$ are used together in fixed ratio; the binding factor is then whichever is shorter in supply. If human-labor supply is the binding constraint at the prevailing demand for $L_{\text{eff}}$, then $W_H$ is determined by $L_H^s$ and the comparative-static channel from $W_A^{\text{eff}}$ to $W_H$ vanishes; if compute is the binding constraint, then $W_H$ tracks $W_A^{\text{eff}}$ scaled by the technology proportion.

For Proposition~\ref{prop:compression}, holding $L_H^s$ and the demand for $L_{\text{eff}}$ fixed, totally differentiating \eqref{eq:relwage} with respect to $W_A^{\text{eff}}$ yields
\[
\frac{d \log W_H}{d \log W_A^{\text{eff}}} = 1 - \frac{1}{\sigma}\cdot\frac{d \log(L_H/L_A)}{d \log W_A^{\text{eff}}}.
\]
Under perfectly elastic $L_H^s$, $W_H$ tracks $W_A^{\text{eff}}$ one-for-one and the second term vanishes. Under inelastic $L_H^s$, the cross-derivative term partially offsets the direct effect, with the magnitude controlled by $1/\sigma$.

\end{document}